\title{\huge {Ideal Abstractions for Decision-Focused Learning}}
 \author{Michael Poli\footnote{Equal contribution. $^1$Microsoft Research. $^2$Stanford University. $^3$Mila and Universit\'e de Montr\'eal. $^4$ Carnegie Mellon University.}~$^{,1,2}$, Stefano Massaroli$^{*,3}$ \\  Stefano Ermon$^{2}$, Bryan Wilder$^4$, Eric Horvitz$^1$
 }
\date{\small{\footnotesize\sf Version}: AISTATS 2023 camera-ready, {\footnotesize\sf  Last Compiled}: \today}
\pgfplotsset{compat=newest} 
\pgfplotsset{
        table/search path={figures/drawings},
    }
\definecolor{bg}{gray}{0.97}
\definecolor{olive}{rgb}{0.6, 0.6, 0.2}
\definecolor{sand}{rgb}{0.8666666666666667, 0.8, 0.4666666666666667}
\definecolor{wine}{rgb}{0.5333333333333333, 0.13333333333333333, 0.3333333333333333}
\definecolor{deblue}{RGB}{11,132,147}
\definecolor{ocra}{RGB}{204, 119, 34}
\newtheorem{definition}{Definition}[section]
\newcommand{\chapref}[1]{\hyperref[#1]{Chapter \ref{#1}}}
\newcommand{\secref}[1]{\hyperref[#1]{Section \ref{#1}}}
\newtheorem{prop}{Proposition}
\newcommand{\x}{\times}
\DeclareRobustCommand{\Arrow}[1][]{%
\check@mathfonts
\if\relax\detokenize{#1}\relax
\settowidth{\dimen@}{$\m@th\rightarrow$}%
\else
\setlength{\dimen@}{#1}%
\fi
\sbox\z@{\usefont{U}{lasy}{m}{n}\symbol{41}}%
\begin{picture}(\dimen@,\ht\z@)
\roundcap
\put(\dimexpr\dimen@-.7\wd\z@,0){\usebox\z@}
\put(0,\fontdimen22\textfont2){\line(1,0){\dimen@}}
\end{picture}%
}
\newcommand{\cA}{\mathcal{A}}
\newcommand{\cL}{\mathcal{L}}
\newcommand{\cP}{\mathcal{P}}
\newcommand{\cU}{\mathcal{U}}
\newcommand{\bA}{\mathbb{A}}
\newcommand{\bE}{\mathbb{E}}
\newcommand{\R}{\mathbb{R}}
\newcommand{\bS}{\mathbb{S}}
\newcommand{\bV}{\mathbb{V}}
\newcommand{\bZ}{\mathbb{Z}}
\newcommand{\eb}{\mathbf{e}}
\newcommand{\bp}{\bm p}
\newcommand{\bq}{\bm q}
\newcommand{\bL}{\bm L}
\DeclareMathAlphabet{\nummathbb}{U}{BOONDOX-ds}{m}{n}
\newcommand{\0}{\nummathbb{0}}
\DeclareRobustCommand\widecheck[1]{{\mathpalette\@widecheck{#1}}}
\def\@widecheck#1#2{%
    \setbox\z@\hbox{\m@th$#1#2$}%
    \setbox\tw@\hbox{\m@th$#1%
       \widehat{%
          \vrule\@width\z@\@height\ht\z@
          \vrule\@height\z@\@width\wd\z@}$}%
    \dp\tw@-\ht\z@
    \@tempdima\ht\z@ \advance\@tempdima2\ht\tw@ \divide\@tempdima\thr@@
    \setbox\tw@\hbox{%
       \raise\@tempdima\hbox{\scalebox{1}[-1]{\lower\@tempdima\box
\tw@}}}%
    {\ooalign{\box\tw@ \cr \box\z@}}}
\begin{document}
\maketitle
 \begin{abstract}
    We present a methodology for formulating simplifying abstractions in machine learning systems by identifying and harnessing the utility structure of decisions. Machine learning tasks commonly involve high-dimensional output spaces (e.g., predictions for every pixel in an image or node in a graph), even though a coarser output would often suffice for downstream decision-making (e.g., regions of an image instead of pixels). Developers often hand-engineer abstractions of the output space, but numerous abstractions are possible and it is unclear how the choice of output space for a model impacts its usefulness in downstream decision-making. We propose a method that configures the output space automatically in order to minimize the loss of decision-relevant information. Taking a geometric perspective, we formulate a step of the algorithm as a projection of the probability simplex, termed \textit{fold}, that minimizes the total loss of decision-related information \textit{in the H-entropy sense}. Crucially, learning in the abstracted outcome space requires less data, leading to a net improvement in decision quality. We demonstrate the method in two domains: data acquisition for deep neural network training and a closed-loop wildfire management task.
\end{abstract}

\setlength\abovedisplayshortskip{2pt}
\setlength\belowdisplayshortskip{2pt}
\setlength\abovedisplayskip{2pt}
\setlength\belowdisplayskip{2pt}

\section{Introduction}
Modern machine learning systems process high-dimensional data such as gigapixel images \citep{litjens2022decade} or graphs with billions of nodes \citep{zheng2020distdgl}. How can machine learning efforts and outputs at this scale be most appropriately matched to predictions made in support of real-world decision-making? Further, how does one go about handling domains where the dimensionality of the problem is so large that one cannot simply collect enough data for a predictive model to ``explore'' its ambient space? 

It has been shown that if collecting a sufficient amount of data is possible, deep learning provides effective methods to compress the information content into a set of parameters \citep{bommasani2021opportunities} which can then be adapted to overcome data constraints in other similar tasks. We focus on domains where it is not possible to acquire enough data for systematic generalization of large models to occur, based in the intrinsic properties of the domain, e.g., sufficient data simply does not exist \citep{hersbach2020era5} or is too expensive to acquire. We introduce and develop a framework to tame this fundamental challenge with the traditional collect-data-and-compute--first approach by incorporating knowledge about downstream tasks. The key direction of distilling ideal abstractions for decision-focused machine learning is inspired by earlier work on utility, abstraction, and information selection in a decision-making setting \citep{horvitz1993abstract,poh1994,horvitz1995display,bach2006asymmetric,kapoor2009boundaries, azuma2006review}.

In this work, we adopt a decision-theoretic perspective to machine learning. We derive a computationally efficient method to abstract away information that is not relevant to the decision task at hand, harnessing clues about problem structure, and in the process, reduce the dimensionality of upstream prediction problems. Concretely, we cast the search for the right abstractions into an optimization problem based on a geometric perspective. We introduce a class of algorithms we refer to as {\tt ORIGAMI} that iteratively aggregate sets of outcomes through projections, termed \textit{folds}, of the probability simplex. Such projections are driven by the information content of each outcome with respect to the downstream task, which can be naturally measured via the Bayes loss of an optimal decision maker \citep{degroot1962uncertainty,zhao2021comparing}. Each fold hides information from downstream agents, gradually coarsening the support of context random variables, and allowing upstream predictive models to learn over sets with less data. The method notably decouples upstream prediction with downstream decision-making, allowing inspection of the learned abstractions used to drive policies.

The structure of the paper and key contributions are as follows. \S\ref{sec:decision_theoretic} contains background on decision-theoretic information, and describes the operational primitives of the novel class of {\tt ORIGAMI} algorithms. We also discuss the choice of projection operators and extensions of decision losses to sets. In \S\ref{sec:origami}, we detail three different objective functions to drive the projections, outlining computation-accuracy trade-offs. We further discuss a deep neural network surrogate for {\tt ORIGAMI} that can be trained to approximate the algorithm over a class of decision losses. In \S\ref{sec:experiments} we validate {\tt ORIGAMI} in data-limited deep active learning as in a closed-loop decision task involving wildfire management, where policies based on predictions over {\tt ORIGAMI} abstractions are shown to perform with lower losses.
\section{Background}\label{sec:problem_setting}
\paragraph{Notation}
Let $p(x,z)$ denote the underlying data generating process relating variables $x$ and outcome variables $z$, and $p(z|x)$ the conditional distribution over a finite set $\bZ$ of size $|\bZ| = C$. An agent observes $x$, and given a model $p_\theta(z|x)$ of $p(z|x)$, returns an action $a$ following the policy $\pi(Z)$. Domain-knowledge about the task is represented as a loss function $\ell:\bZ\times\bA\rightarrow\R$ measuring the cost of performing action $a$ when the outcome is $z$.

As both outcome and action spaces are assumed to be of finite dimensions, the loss function can be conveniently represented by a matrix $\bL\in\mathbb{R}^{|\bA|\times C}$ defined as
\[
    \bL_{ij} = \ell({z_i, a_j}).
\]
Further, $\bp = \{p(z_i|x)\}_i$ is a vector in $\R^C$ taking values in the probability simplex $\Delta^C$. In practice $p(x,z)$ is not known and we are given a dataset $\{x_k, z_k\}$ of samples, in addition to a decision loss $\ell$, with the final objective of identifying the best policy.

\paragraph{Problem setting}

We are interested in domains where the space of outcomes $\bZ$ for the random variable $Z$ is high-dimensional, e.g., the set of all possible medical conditions, or the space of geographical locations. As an example, consider the setting where a clinician is tasked with choosing an optimal treatment for a patient given the distribution $ \bp_{\theta}$ over a set of patient states, given measurements $x$. Here, optimizing a model for  $\bp_{\theta}$ or the policy $\pi$ can be challenging and require a large number of samples from $p(x,z)$. To overcome this limitation, we propose to reduce the dimensionality of $Z$ in a way that preserves as much useful information as possible for downstream decision-making. Our main targets are scalable methods, including approaches that generate explainable abstractions to enable compatibility with human decision-makers.

The core insight behind our approach is that not all the information contained in $\bp_{\theta}$ is necessary for decision-making. We take inspiration from human decision-making, where action under uncertainty appears to be taken swiftly with redundant information being abstracted away  \citep{lindig2019analyzing,ho2019value}.

\begin{figure}
    \centering
    \includegraphics[width=.5\linewidth]{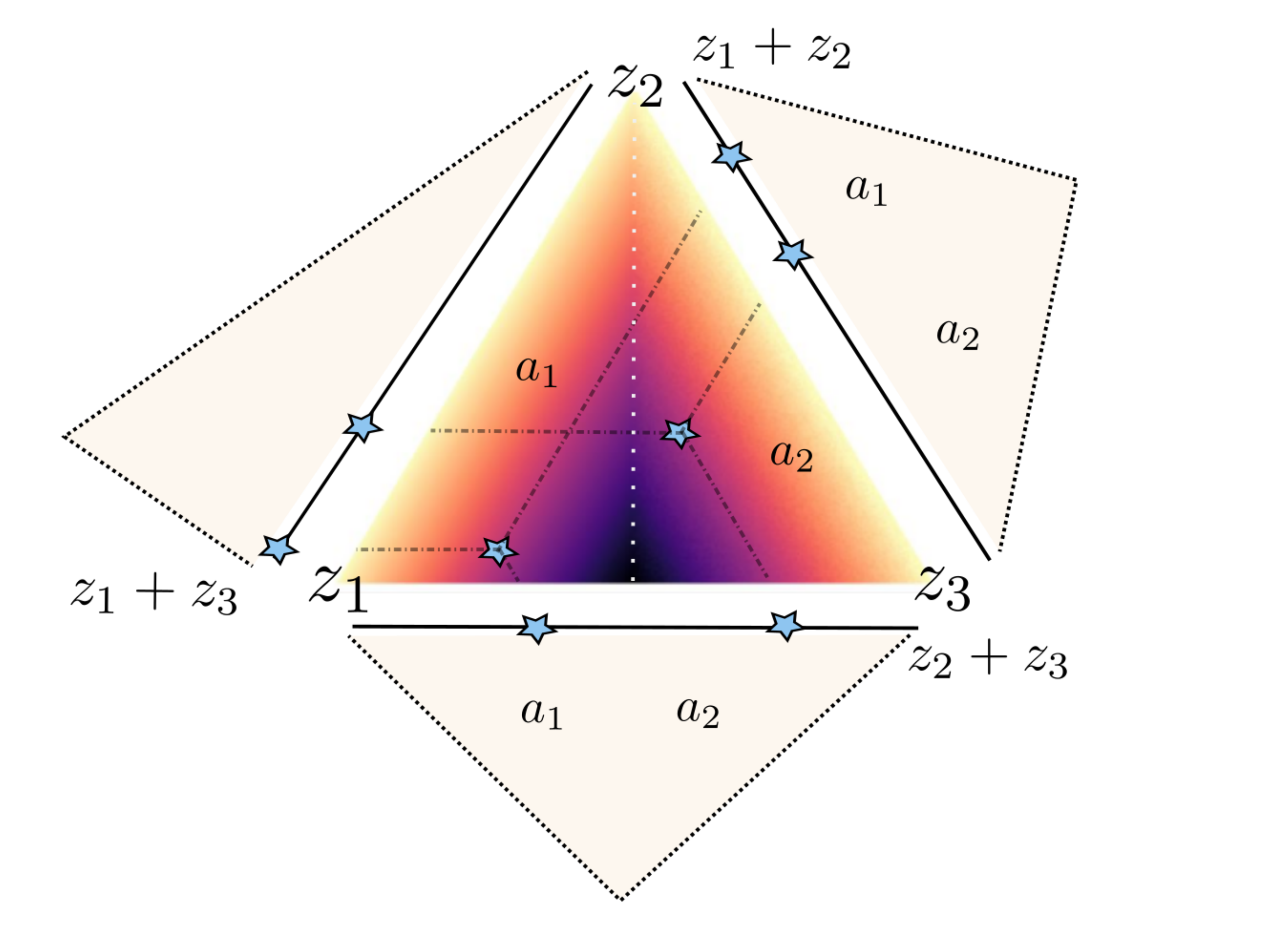}
    \vspace{-5.5mm}
    \caption{\footnotesize Folding the probability simplex can introduce a suboptimality gap in downstream decision-making. Some distributions $\bp$ remain on the same side of the decision boundary (in white), whereas others switch sides.}
    \label{fig:example_fold}
\end{figure}
\section{Decision-Theoretic Information}\label{sec:decision_theoretic}
Our goal to find a complete partition for the support of $p(z|x)$ i.e., $\cP(\bZ) = \{\bZ_k\}$, $\bZ_i \bigcap  \bZ_j = \emptyset$ for any $i \neq j$ and $\bigcup \bZ_k = \bZ$. Out of all possible partitions of the set, we seek those that minimally affect decision-making, as measured by the loss $\ell$. In other words, we aim to hide information that is not relevant to the decision task. A natural quantity to consider is the H-entropy \citep{degroot1962uncertainty,zhao2021comparing} of $p(z|x)$:
\begin{equation}\label{eq:hent}
    \begin{aligned}
    H_{\ell}(\bp) &= \inf_{a\in\cA}\bE_{p(z|x)}[\ell(Z, a)] \\
                &= \min_a (\bL \bp).
     \end{aligned}
\end{equation}
where $\bL \bp \in \R^{|\bA|}$. H-entropy is the Bayes optimal loss for an agent required to select an optimal action $a$ in expectation over $p(z|x)$, and generalizes other notions of information.

For convenience of notation, we will henceforth denote vectors $p(z|x)$ with $\bp$. Defining a partition $\cP(\bZ)$ naturally induces a distribution $\bq$ with support $\cP(\bZ)$. Thus, we can quantify the increase of H-entropy caused by partitioning the support $\bZ$:
\[
    \delta(\bq, \bp) = H_{\tilde \ell}(\bq) - H_{\ell}(\bp)
\]
which we refer to as the H-entropy suboptimality gap of $\cP(\bZ)$. For the above to be well-defined, we require a decision loss over the sets in $\cP(\bZ)$, denoted as $\tilde \ell$. We detail how to define set extensions of $\ell$ in Sec. \ref{ext:sets}.
\subsection{How to Fold a Simplex}

We cast the search for a partition $\cP(\bZ)$ through a geometric lens, leveraging the structure of the simplex $\Delta^C$. Our basic operation will involve \textit{folding} the simplex:
\begin{definition}[Simplex fold]\label{simplex_fold}
A fold is a map $f_{i\rightarrow j}: \Delta^C\rightarrow\Delta^{C-1};~\bp\mapsto \bq$ defined as 
\[
f_{i\rightarrow j} = 
\begin{cases}\bq_k = \bp_k \quad & \forall k \neq i, \forall k \neq j \\
\bq_j = \bp_i + \bp_j \quad & \text{otherwise}
\end{cases}
\]
\end{definition}

A fold projects elements of $\Delta^C$ onto $\Delta^{C-1}$. There is an intuitive interpretation for the output of a folding operation: two outcomes $z_i, z_j$ are grouped together into a set, and $\bq_j$ is the probability that either $z_i$ or $z_j$ occur. 

\begin{tcolorbox}[
    colback=blue!5,     
    sharp corners,
    boxrule=0.2mm]
\textbf{\textsf{Example:}} Consider a three-dimensional simplex $\Delta^3$, i.e., with $|\bZ| = 3$, and a loss function
\[
\bL = \begin{bmatrix}1&0&0\\0&0&1\end{bmatrix}
\]
The simplex and decision boundaries are visualized in Fig. \ref{fig:example_fold}. Along each side of the simplex, we show the decision loss over three projections to $\Delta^2$ obtained by summing
the probabilities of outcomes $z_1, z_2, z_3$ pairwise. Some points $\bp\in\Delta^C$ do not cross the decision boundary after the projection, whereas others do, introducing a suboptimality gap. The decision boundary is linear in this example since $|\bA| = 2$\footnote{The general case studied in this paper is $|\bA| > 2$, where the decision boundaries are piecewise-linear.}.
\end{tcolorbox}

\paragraph{Partition as a sequence of folds}
We uniquely identify a partition $\cP(\bZ)$ via the sequence of folds 
$$f_{i_N\rightarrow j_N} \circ \dots \circ f_{i_1\rightarrow j_1}$$
where $i_n,j_n$ are the folding indexes at algorithm iteration $n$. 
Consider the example in Figure \ref{fig:part_th_folds}, where the partition $\cP(\bZ) = \{{\tt 1}, \{{\tt 2},{\tt 3}\}, \{{\tt 4},{\tt 5}\}\}$ is identified through $f_{4\rightarrow 5}\circ f_{3\rightarrow 2}$.

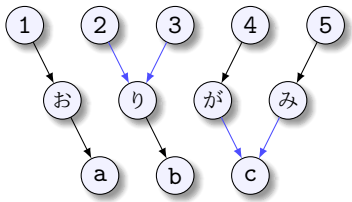
\begin{figure}[H]
    \centering
    \tikzset{vertex/.style={
    circle, 
    draw,
    fill=blue!5,
    inner sep=0pt, outer sep=0pt,
    blur shadow={shadow blur steps=5,shadow blur extra rounding=1.3pt},
    minimum size=0.5cm}
}
\begin{tikzpicture}
    \node [vertex] (v11) at (0, 0) {$\footnotesize\tt 1$};
    \node [vertex] (v12) at (1, 0) {$\footnotesize\tt 2$};
    \node [vertex] (v13) at (2, 0) {$\footnotesize\tt 3$};
    \node [vertex] (v14) at (3, 0) {$\footnotesize\tt 4$};
    \node [vertex] (v15) at (4, 0) {$\footnotesize\tt 5$};
    \def\dx{.5}
    \node [vertex] (v21) at (0 + \dx, -1) {\footnotesize お};
    \node [vertex] (v22) at (1 + \dx, -1) {\footnotesize り};
    \node [vertex] (v23) at (2 + \dx, -1) {\footnotesize が};
    \node [vertex] (v24) at (3 + \dx, -1) {\footnotesize み};
    \def\ddx{1}
    \node [vertex] (v31) at (0 + \ddx, -2) {$\tt a$};
    \node [vertex] (v32) at (1 + \ddx, -2) {$\tt b$};
    \node [vertex] (v33) at (2 + \ddx, -2) {$\tt c$};
    \draw[-latex] (v11)--(v21);
    \draw[-latex, blue!70] (v12)--(v22);
    \draw[-latex, blue!70] (v13)--(v22);
    \draw[-latex] (v14)--(v23);
    \draw[-latex] (v15)--(v24);
    \draw[-latex] (v21)--(v31);
    \draw[-latex] (v22)--(v32);
    \draw[-latex, blue!70] (v23)--(v33);
    \draw[-latex, blue!70] (v24)--(v33);
    %
    
    
    
    
    \end{tikzpicture}
    \vspace{-2mm}
    \caption{\footnotesize $\cP(\bZ) = \{{\tt 1}, \{{\tt 2},{\tt 3}\}, \{{\tt 4},{\tt 5}\}\}$ identified via two folds of $\Delta^5$. Name changes to classes indicate the implicit change of probability space after every fold.}
    \label{fig:part_th_folds}
\end{figure}

Iterative folding constructs a tree, starting from the $C$ vertices of $\Delta^C$ as leaves. Every fold adds a level, merging two nodes. At termination, each top-level node defines a set in the  final partition $\cP(\bZ)$, with elements identified as the leaves reachable from it.

\subsection{Computing Decision Losses on Sets}\label{ext:sets}

We seek an algorithmic procedure that iteratively folds the simplex until a reaching a stopping condition. We require an extension to $\ell$ that admits set-valued inputs. In the following, we consider the natural worst-case extension, 
\[
    \tilde \ell(S, a) = \max_{z\in S} \ell(z, a),\quad S\subset\bZ.
\]

The matrix representation follows by replacing, for each row $k$, column $i$ with the maximum of columns $i$ and $j$: $\tilde \bL_{kj} \leftarrow \max\{\bL_{ki}, \bL_{kj}\}$, in time $\Theta(|\bA|)$.

We note that, while theoretically possible, other free-form (mass preserving) projections may not have a sensible physical interpretation. Folding the simplex, as prescribed by \eqref{simplex_fold}, has the effect of grouping two outcomes together into a set, such that is remains possible to reason about worst-case decision losses. Numerically, this choice is key to preserving fast updates $\Theta(|\bA|)$ to the decision loss matrix $\bL$ required to compute $\ell$ over sets.
\paragraph{Properties of partitions} 
Assume to be given a perfect model of $\bp$ and a perfect decision making policy. Hiding information by partitioning its support can never improve the policy. Intuitively, this is due to the fact that information about events is now conveyed at a coarser level via sets $\{\bZ\}_k$ in the partition, rather than at the finer level of individual events. 

\begin{prop}[Folding increases H-entropy]\label{prop:increaseee}
Let $\bp\in\Delta^C$ and $\bq = f \circ f \dots \circ f(\bp)$ be any sequence of folds. Then,
\[
H_{\ell}(\bp) \leq H_{\tilde{\ell}}(\bq). 
\]
\end{prop}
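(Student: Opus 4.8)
The plan is to reduce the statement, which concerns an arbitrary composition of folds, to the single-fold case and then chain the resulting inequalities by induction. A fold is a map $\Delta^C\to\Delta^{C-1}$, and the worst-case extension of Section~\ref{ext:sets} produces a new loss matrix on the coarser outcome space while leaving the action set $\bA$ untouched. I would therefore first prove that one application of $f_{i\rightarrow j}$ weakly increases the H-entropy, stated for an \emph{arbitrary} distribution and an \emph{arbitrary} loss matrix so that it can be reapplied at each stage of the composition $f_{i_N\rightarrow j_N}\circ\cdots\circ f_{i_1\rightarrow j_1}$. Transitivity of $\leq$ along this chain then yields $H_\ell(\bp)\leq H_{\tilde\ell}(\bq)$.

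For the single-fold step I would write both H-entropies in the matrix form $H_\ell(\bp)=\min_a(\bL\bp)_a$ and isolate the two folded coordinates. For a fixed action $a$,
\[
(\bL\bp)_a = \sum_{k\neq i,j}\bL_{ak}\bp_k + \bL_{ai}\bp_i + \bL_{aj}\bp_j,
\]
whereas the extended matrix $\tilde\bL$, whose merged column $j$ has entry $\tilde\bL_{aj}=\max\{\bL_{ai},\bL_{aj}\}$ in each row $a$, acting on $\bq$ (with $\bq_j=\bp_i+\bp_j$) gives
\[
(\tilde\bL\bq)_a = \sum_{k\neq i,j}\bL_{ak}\bp_k + \max\{\bL_{ai},\bL_{aj}\}\,(\bp_i+\bp_j).
\]

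The crux is the pointwise comparison $(\tilde\bL\bq)_a \geq (\bL\bp)_a$ for every action $a$, which follows from $\max\{\bL_{ai},\bL_{aj}\}\geq\bL_{ai}$ and $\max\{\bL_{ai},\bL_{aj}\}\geq\bL_{aj}$ together with $\bp_i,\bp_j\geq 0$, so that $\max\{\bL_{ai},\bL_{aj}\}(\bp_i+\bp_j)\geq \bL_{ai}\bp_i+\bL_{aj}\bp_j$. Having established that $(\tilde\bL\bq)_a\geq(\bL\bp)_a$ componentwise over the common index set $\cA$, I would conclude with the elementary monotonicity of the minimum: letting $a^\star$ attain $\min_a(\tilde\bL\bq)_a$,
\[
H_{\tilde\ell}(\bq)=(\tilde\bL\bq)_{a^\star}\geq(\bL\bp)_{a^\star}\geq\min_a(\bL\bp)_a=H_\ell(\bp).
\]

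I do not anticipate a genuine obstacle: the argument is essentially monotonicity of the Bayes loss under a loss inflated entrywise. The points requiring care are bookkeeping — tracking which coordinate carries the merged mass and holding the action set fixed so the two minima range over the same index set — and recognizing that it is precisely the worst-case ($\max$) extension that guarantees the entrywise inequality. An averaging- or minimum-based extension would violate $\tilde\bL_{aj}\geq\bL_{ai},\bL_{aj}$ and break the comparison, which is why the $\max$ convention of Section~\ref{ext:sets} is essential to the result.
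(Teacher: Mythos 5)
Your proof is correct and follows essentially the same route as the paper's: a per-action (row-wise) comparison for a single fold, reducing to $\bL_{ki}\bp_i + \bL_{kj}\bp_j \leq \max\{\bL_{ki},\bL_{kj}\}(\bp_i+\bp_j)$, followed by monotonicity of the minimum over actions. You merely make explicit two steps the paper leaves implicit — the induction along the composition of folds and the final passage from the entrywise inequality to the minima — which is a welcome tightening, not a different argument.
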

In words, partitioning the support of $\bp$ raises the optimal lower bound decision loss. Specifically, the Bayes optimal loss lower bound increases due to the worst-case set extension. 
\begin{tcolorbox}[
    enhanced,
    breakable,
    colback=yellow!5,     
    sharp corners,
    boxrule=0.2mm]
\textbf{\textsf{Remark:}} Other set extensions for $\ell$ are available. We discuss a weighted sum extension where 
\[
    \tilde \ell(S, a) = \sum_{k: z_k\in S} \frac{\ell(z_k, a) \bp_k}{\sum_{k: z_k\in S}\bp_k}.
\]
In this case, prop. \ref{prop:increaseee} holds with equality: H-entropy is preserved by folding. How can the optimal decision loss not be affected by projecting the simplex down to a smaller dimension, effectively hiding information? This paradox is explained by noting that, through a fold and corresponding choice of set extension, one affects not only the \textbf{information content} but also \textbf{the downstream task}. For example, a summing extension 
\[
    \tilde \ell(S, a) = \sum_{z\in S} \ell(z, a).
\]
penalizes an outcome based on other outcomes in the same set, regardless of whether they occur or not. This results in penalizing larger sets in the partition $\cP(\bZ)$, biasing the projections found by the algorithm.

\end{tcolorbox}

Interestingly, we observe that the utility of decisions can improve if one optimizes a model $\bq_{\theta}$ on the \textit{lower resolution} support given by sets in $\cP(\bZ)$ rather than on the original support, particularly in data-limited regimes. An interpretation of this phenomenon is that partitioning into sets acts as a form of regularization for $\bq_{\theta}$ by hiding information  not relevant to the downstream task.

\section{{\tt ORIGAMI}: Algorithmic Folding}\label{sec:origami}
Each fold renders two outcomes indistinguishable from the perspective of the decision-maker. If $H_{\ell}(\bp) = H_{\tilde \ell}(f_{i\rightarrow j}(\bp))$, $z_i$ and $z_j$ are already equivalent for the decision task induced by $\ell$, and can thus be treated as a unique outcome without suboptimality. 
We have thus established a high-level desideratum for a folding algorithm: minimize at each step the suboptimality gap $\delta(\bp, \bq)$ induced by the projection. However, the gap discussed so far is local in the simplex, evaluating $\delta$ on two vectors $\bp \in \Delta^C$ and $\bq \in \Delta^{C-1}$. 

In practice, we have access to a dataset with samples from $p(z,x)$, yielding conditionals $p(z|x)\in\Delta^C$. Here, applying a fold $f_{i\rightarrow j}$ introduces a suboptimality gap at each point.

\paragraph{Folding objective}
Following this reasoning, one can cast each {\tt ORIGAMI} step as the following program:
\begin{equation}
    i^*, j^* = \arg\min_{i,j,i\neq j}\mathcal{L}(i, j, \bL)
\end{equation}
where $i^*, j^*$ are the indices of the optimal fold $f_{i^*\to j^*}$. The following discussion details three choices of objectives $\mathcal{L}$ that take into account different global information about the suboptimality induced by $f_{i\rightarrow j}$: \textit{total}, \textit{worst-case}, and \textit{vertex-only}.

\subsection{Integral Objective}\label{sec:integral}

The first objective relies on evaluating $\delta$ over the entire simplex:
\begin{equation}\label{eq:mc}
    \begin{aligned}
    \mathcal{L} = \frac{1}{\lambda}\int_{\Delta^{C}} \left[ H_{\tilde \ell}(f_{i \rightarrow j}(\bp)) - H_{\ell}(\bp) \right] \dd \bp. 
    \end{aligned}
\end{equation}
where $H_{\tilde\ell}$ denotes the H-entropy endowed with the \textit{folded} loss matrix, $H_{\tilde\ell}(f_{i\to j}(\bp)) = \min_{a}(\tilde \bL \bq)$.
This choice of objective corresponds to the $L_1$ norm of H-entropy increase and can be evaluated via Monte Carlo (MC) integration, thus requiring computationally costly sampling of $N$ vectors $\bp$ in $\Delta^C$ and evaluation of $\delta(f_{i\rightarrow j}(\bp), \bp)$ for all choices of $i,j$. By standard Law of Large Numbers arguments, the variance $\bV$ of a Monte Carlo estimate $\hat\mu_N$ of the total integral loss \eqref{eq:mc}
\[
    \hat{\mu}_N(i,j) = \frac{1}{N} \sum_{k=1}^N \left[ H_{\tilde \ell}(f_{i \rightarrow j}(\bp_k)) - H_{\ell}(\bp_k) \right]
\]
can be shown to converge linearly i.e., $\mathbb{V}[\hat\mu_N(i,j)] = \mathcal{O}({1}/{N})$ in the number of samples regardless of the dimension $C$.
\begin{prop}[Integral objective cost]
    {\tt ORIGAMI} driven by the objective \eqref{eq:mc}, with an $\epsilon$ requirement $\mathbb{V}[\hat{\mu}_N] \leq \epsilon$ has an asymptotic time cost of $\mathcal{O}(\frac{1}{\epsilon}|\bA|C^2)$.
\end{prop}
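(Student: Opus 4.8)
The plan is to factor the asymptotic cost of one {\tt ORIGAMI} step into three multiplicative pieces that can be analyzed independently: the number of Monte Carlo samples $N$ forced by the variance requirement, the number of candidate folds scored per iteration, and the per-sample cost of evaluating the integrand $H_{\tilde\ell}(f_{i\to j}(\bp)) - H_\ell(\bp)$. The claim then amounts to showing these combine to $\mathcal{O}(\frac{1}{\epsilon}|\bA|C^2)$, the crux being that—with the right precomputation—the per-fold per-sample work is only $\Theta(|\bA|)$ rather than the naive $\Theta(|\bA|C)$.

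First I would pin down $N$. Because $\bL$ has bounded entries, the integrand is uniformly bounded, so the stated rate $\mathbb{V}[\hat\mu_N(i,j)] = \mathcal{O}(1/N)$ supplies a constant $\sigma^2$ with $\mathbb{V}[\hat\mu_N] \le \sigma^2/N$ uniformly over pairs $(i,j)$. Imposing $\mathbb{V}[\hat\mu_N]\le\epsilon$ then forces $N = \Theta(1/\epsilon)$. Since a single shared pool of $N$ samples drawn from $\Delta^C$ can be reused across every candidate fold, this factor enters the total cost only once.

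The key step is the incremental evaluation. I would precompute, once per sample, the action-loss vector $\bL\bp\in\R^{|\bA|}$ and the baseline $H_\ell(\bp)=\min_a(\bL\bp)$, at total cost $\mathcal{O}(N|\bA|C)$. Then, using the fold of Def.~\ref{simplex_fold} together with the worst-case extension of Sec.~\ref{ext:sets}, the folded action-loss vector $\tilde\bL\bq$ with $\bq = f_{i\to j}(\bp)$ differs from the baseline only through the two merged coordinates: for each row $k$,
\[
(\tilde\bL\bq)_k = (\bL\bp)_k + \max\{\bL_{ki},\bL_{kj}\}(\bp_i+\bp_j) - \bL_{ki}\bp_i - \bL_{kj}\bp_j.
\]
Each such update is $\mathcal{O}(1)$ per row, hence $\mathcal{O}(|\bA|)$ for the whole vector, after which $H_{\tilde\ell}(f_{i\to j}(\bp)) = \min_a(\tilde\bL\bq)$ costs another $\mathcal{O}(|\bA|)$. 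Thus each of the $\binom{C}{2}=\mathcal{O}(C^2)$ candidate folds is scored over all $N$ samples in $\mathcal{O}(C^2 N|\bA|)$ time, which dominates both the $\mathcal{O}(N|\bA|C)$ precomputation and the $\mathcal{O}(C^2)$ argmin over pairs. Substituting $N=\Theta(1/\epsilon)$ yields $\mathcal{O}(\frac{1}{\epsilon}|\bA|C^2)$.

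The main obstacle—and the step distinguishing the claimed $C^2$ from a naive $C^3$—is precisely establishing that the evaluation is $\mathcal{O}(|\bA|)$. A direct recomputation of $\min_a(\tilde\bL\bq)$ from the rebuilt folded matrix would cost $\mathcal{O}(|\bA|C)$ per (fold, sample) and inflate the bound by a factor $C$. I therefore expect the essential work to be verifying that the $\Theta(|\bA|)$ loss-matrix update of Sec.~\ref{ext:sets} propagates, via the displayed identity, to an $\mathcal{O}(|\bA|)$ correction of the precomputed action-loss vector, and that the ensuing minimization also stays $\mathcal{O}(|\bA|)$; the remaining accounting (sampling, accumulation, and the outer argmin) is lower order and routine.
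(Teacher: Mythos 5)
Your proof is correct and follows the same factorization as the paper's own proof sketch: $\Theta(1/\epsilon)$ samples forced by the variance requirement, $\binom{C}{2}=\mathcal{O}(C^2)$ candidate folds, and $\mathcal{O}(|\bA|)$ work per fold--sample pair. Where you genuinely add something is on the third factor, which is exactly the step the paper leaves vague: its sketch asserts a cost of ``$C^2$ to compute $\bL\bp$ and $|\bA|$ to find its minimum entry,'' which taken literally is inconsistent (a from-scratch evaluation of $\bL\bp$ costs $\Theta(|\bA|C)$ per pair, which would inflate the total to $\mathcal{O}(\frac{1}{\epsilon}|\bA|C^3)$; the $C^2$ factor can only be the pair count). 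Your correction identity $(\tilde\bL\bq)_k = (\bL\bp)_k + \max\{\bL_{ki},\bL_{kj}\}(\bp_i+\bp_j) - \bL_{ki}\bp_i - \bL_{kj}\bp_j$, applied to a once-per-sample precomputed $\bL\bp$, is precisely the argument needed to make the per-pair, per-sample cost $\Theta(|\bA|)$ --- it is the product-level analogue of the $\Theta(|\bA|)$ loss-matrix update noted in Sec.~\ref{ext:sets} --- and your check that the precomputation and the final argmin are lower order completes the accounting rigorously. In short: same route as the paper, but yours is the version that actually closes the gap between the naive $C^3$ bookkeeping and the claimed $\mathcal{O}(\frac{1}{\epsilon}|\bA|C^2)$ bound.
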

\begin{proof}
    We report here a proof sketch. For each pair of vertices in the simplex $\Delta^C$, $\frac{1}{2}C(C-1)$, we incur a cost $C^2$ to compute $\bL \bp$ and $|\bA|$ to find its minimum entry. This process has to be repeated ${1}/{\epsilon}$ for the variance of the MC estimate to be smaller than $\epsilon$.
\end{proof}
The minimization of the empirical estimate of \eqref{eq:mc} is then practically achieved by constructing the upper triangular portion of the matrix $\bm{M}_{ij} = \mu_N(i,j)$ and subsequently choosing the indices $(i^*,j^*)$ of the smallest entry of $\bm{M}$.

\clearpage

We report pseudocode below\footnote{The inner for-loop is fully parallelizable.}.

\begin{python}    
    # Fold with integral objective.
    # Input: $\Delta^c$, $L$, $N$.
    M = zeros(c, c) # 
    M = M + 10^4 # large initial distance
    p = uniform_sample(N, c) # on the simplex
    for (i, j) in combinations(range(c), 2):
        H_p = einsum("ac,bc->ba", L, p).min(dim=1) 
        q, Lt = fold(p, L, i, j) 
        H_q = einsum("ac,bc->ba", Lt, q).min(dim=1)
        M[i, j] = (H_q - H_p).mean(dim=0)
    i_fold, j_fold = argmin2d(M)
\end{python}

We further note that importance sampling and other variance reduction techniques may offer slight improvement to the convergence rate of $\hat{\mu}_N(i,j)$, reducing the overall cost of an {\tt ORIGAMI} fold. Instead, we leverage the structure of $H_{\ell}$ to develop alternative formulations to the integral objective.

\subsection{Max-Increase Objective}
Instead of the total loss of H-entropy (in a $L_1$ sense), we can choose folds that minimize the worst-case increase:
\begin{equation}
    \begin{aligned}
    \mathcal{L} &= \sup_{\bp\in\Delta^C} \left[H_{\tilde  \ell}(f_{i\rightarrow j}(\bp)) - H_{\ell}(\bp)\right] &~~ \Leftrightarrow\\
    &= \max_{\bp\in\Delta^C} \left[\min_{a}(\tilde{\bL} \bq)  - \min_{a}(\bL \bp)\right].
    \end{aligned}
\end{equation}
That is, the infinity norm of H-entropy increase induced by a fold $i \to j$. To find $i^*, j^* = \min_{i,j}\mathcal{L}(i,j,\bL)$ one has to solve, for each pair of indices, the inner optimization problem
\begin{equation}\label{eq:dc}
\max_{\bp\in\Delta^C} \left[\underbrace{\min_{a}(\tilde{\bL} \bq)}_{\text{concave}}  - \underbrace{\min_{a}(\bL \bp)}_{\text{concave}}\right]
\end{equation}
which belongs to the class of \textit{difference of convex or concave} (DC) problems \citep{hartman1959functions}. Here, we employ the \textit{concave-convex procedure} \citep{lipp2016variations}, a class of heuristic algorithms to find local solutions to DC problems.
\paragraph{Solving the inner-loop problem}

The simplest variant of a concave-convex procedure to compute $\cL$ starts by sampling an initial candidate maximizer $\bp^0\in\Delta^C$. Then, the candidate maximizer $\bp^{k}$ is updated as follows: the convex part of the problem is linearized around $\bp^k$,
\[  
    \hat H_\ell(\bp, \bp^{k}) =  \min_a \bL\bp^k + g^\top_k(\bp - \bp^k)
\]
where $g_k$ is a subgradient of $H_{\ell}$ i.e., $g_k\in \partial H_\ell(\bp^k)$. The candidate maximizer is then updated by solving the concave problem resulting from substituting $H_\ell(\bp)$ with its linearization, i.e.
\[
    \bp^{k+1} = \arg \max_{\bp\in\Delta^C} \left[\min_a \tilde\bL\bq - \hat H_\ell(\bp, \bp^{k}) \right]
\]
The algorithm is iterated until convergence, e.g., when the improvement in the true objective is less than a specified threshold. This adaptation of the convex-concave procedure to compute the objective for $\tt ORIGAMI$ folding leverages on the assumption that, at each step, the concavified problems can be solved efficiently (see \cite{lipp2016variations} for further details and variants of this method).

Similar to the integral loss case, the objective needs to be computed for each unordered tuple $(i,j)$ in order to chose the optimal folding.

\clearpage

\begin{python}
    # Fold with max-increase objective.
    # Input: $\Delta^c$, $L$.
    M = zeros(c, c) # 
    M = M + 10^4 # large initial distance
    for (i, j) in combinations(range(c), 2):
        p = solve_inner_dc_problem(c) 
        H_p = einsum("ac,bc->ba", L, p).min(dim=1) 
        q, Lt = fold(p, L, i, j) 
        H_q = einsum("ac,bc->ba", Lt, q).min(dim=1)
        M[i, j] = H_q - H_p
    i_fold, j_fold = argmin2d(M)
\end{python}
Note that if $|\bA| = 1$, $L \in \mathbb{R}^{1 \times C}$ is a vector and the inner problem is the linear program $\max_{\bp\in\Delta^C}[\tilde{L}q - L p]$.
\subsection{Vertex Objective}\label{sec:vertex}
Not all points on the simplex carry the same information for {\tt ORIGAMI}. Due to concavity, H-entropy is always minimized at a vertex of the simplex: 
\begin{prop}[H--entropy is minimized on vertices]
The minimizer $\bp^* = \arg\min_{\bp \in\Delta^C} H_{\ell}(\bp)$ is a vertex of $\Delta^C$.
\end{prop}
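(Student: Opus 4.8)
The plan is to reduce the claim to the standard fact that a concave function attains its minimum over a polytope at an extreme point, and to identify the extreme points of $\Delta^C$ with its vertices $\eb_1,\dots,\eb_C$ (the standard basis vectors, i.e. the degenerate distributions).

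First I would verify that $H_\ell$ is concave. By \eqref{eq:hent}, $H_\ell(\bp)=\min_a (\bL\bp)$ is the pointwise minimum, over the finitely many actions $a\in\cA$, of the maps $\bp\mapsto (\bL\bp)_a=\sum_i \bL_{ai}\bp_i$, each of which is affine (indeed linear) in $\bp$. Since the pointwise infimum of a family of affine functions is concave, $H_\ell$ is concave on $\Delta^C$. This is exactly the concavity already invoked in the sentence preceding the statement.

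Next I would use that, by definition of the simplex, every $\bp\in\Delta^C$ is the convex combination $\bp=\sum_{i=1}^C \bp_i\,\eb_i$ of the vertices, with convex weights equal to its own coordinates. Let $\bp^\star$ be any global minimizer of $H_\ell$ over $\Delta^C$. Applying concavity (Jensen's inequality) gives $H_\ell(\bp^\star)\ge \sum_{i=1}^C \bp^\star_i\,H_\ell(\eb_i)\ge \min_{i} H_\ell(\eb_i)$. On the other hand, since $\bp^\star$ is a global minimizer and each $\eb_i\in\Delta^C$, we have $H_\ell(\bp^\star)\le H_\ell(\eb_i)$ for every $i$, hence $H_\ell(\bp^\star)\le \min_i H_\ell(\eb_i)$. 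Combining the two inequalities yields $H_\ell(\bp^\star)=\min_i H_\ell(\eb_i)$, so the vertex $\eb_{i^\star}$ attaining $\min_i H_\ell(\eb_i)$ is itself a global minimizer.

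I do not anticipate a genuine obstacle, as the result is a direct consequence of concavity; the only point requiring care is non-uniqueness of the $\arg\min$. The argmin need not be a singleton, so the statement is best read as ``some minimizer is a vertex'' rather than ``every minimizer is a vertex'': the chain of equalities above shows precisely that a vertex minimizer always exists. Ruling out non-vertex minimizers entirely would require strict concavity, which fails for the piecewise-linear $H_\ell$ (it is constant along the faces where a single action stays optimal), so the claim should be kept in this existential form.
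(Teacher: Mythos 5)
Your proof is correct and follows essentially the same route as the paper's: both express a minimizer as a convex combination of the vertices and apply Jensen's inequality (concavity of $H_\ell$ as a pointwise minimum of linear maps in $\bp$) to bound the minimum over $\Delta^C$ below by the minimum over the vertices, concluding since the vertices lie in $\Delta^C$. Your closing remark that the statement is best read existentially---a vertex minimizer always exists, but non-vertex minimizers are not excluded for the piecewise-linear $H_\ell$---is a fair clarification that the paper leaves implicit.
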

Therefore, we may wish to focus on the regions of the simplex corresponding to confident (peaked) predictions of the upstream model $p_\theta(z|x)$ i.e., close to the vertices. We propose an objective for {\tt ORIGAMI} where folding indices are obtained after comparing the H-entropy at all vertices:
\begin{equation}\label{vertex_loss}
  \mathcal{L} = |H_{\ell}(\bp^{(i)}) - H_{\ell}(\bp^{(j)})| = |\min_{a}(\bL_i) - \min_{a}(\bL_j)|  
\end{equation}
where $\bp^{(i)}$ and $\bp^{(j)}$ are in the vertex set of the simplex. The vertex loss can be computed efficiently in $\Theta(|\bA|C^2)$. In particular, it does not require updating $L$ for each pair $i, j$: the decision matrix is 
updated to $\tilde L$ only after optimal pair $i^*, j^*$ is found, in contrast to integral and max-increase objectives. 
\begin{python}
    # Fold with vertex objective.
    # Input: $\Delta^c$, $L$.
    M = zeros(c, c) # 
    M = M + 10^4 # large initial distance
    for (i, j) in combinations(range(c), 2):
        p, q = one_hot(i, c), one_hot(j, c)
        H_p = einsum("ac,bc->ba", L, p).min(dim=1) 
        H_q = einsum("ac,bc->ba", L, q).min(dim=1)
        M[i, j] = H_q - H_p
    i_fold, j_fold = argmin2d(M)
\end{python}

\paragraph{Setting a stopping condition}
${\tt ORIGAMI}$ iterations may be stopped after a predetermined number of folds, or alternatively after the total suboptimality gap $\delta$ reaches a tolerance threshold. Interestingly, 
other {\tt ORIGAMI} runs may also be recursively initialized within each set in the output partition of the first run, yielding a hierarchical tree-of-sets abstraction of $\bZ$.

\begin{figure*}[b]
    \centering
    \includegraphics[width=0.85\linewidth]{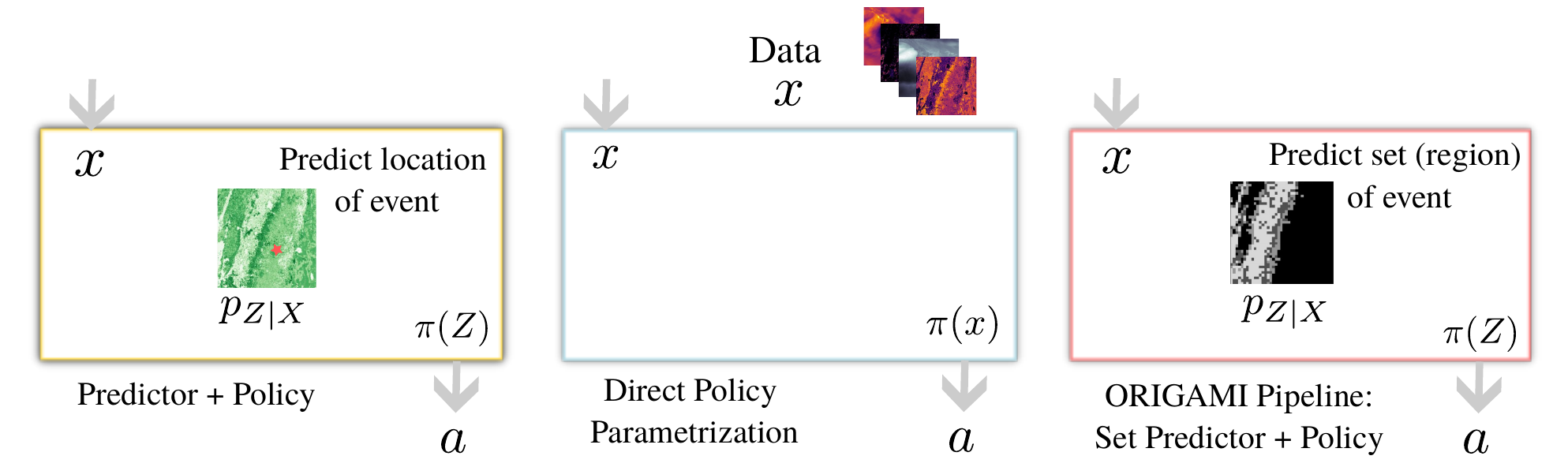}
    \vspace{-3mm}
    \caption{\footnotesize Applying {\tt ORIGAMI} to wildfire management, from prediction to action. \textbf{[Left]} {\tt Location predict} uses a location-level event predictor, then picks the best action that the predicted location \textbf{[Middle] } {\tt Direct policy} directly parameterizes the distribution over actions \textbf{[Right]} {\tt ORIGAMI} uses a region predictor, then picks the best action in the predicted region.}
    \label{fig:strategies}
\end{figure*}

\section{Numerical Experiments}\label{sec:experiments}

We now showcase how {\tt ORIGAMI} and set abstractions can be used in different learning contexts. The goal is to validate the scalability of {\tt ORIGAMI} to settings with thousands of outcomes, and to investigate whether abstractions improve downstream policies. If not specified, we use {\tt ORIGAMI} with the vertex objective.

\subsection{Folding for Decision Problems}

We evaluate support folding and {\tt ORIGAMI} in decision-making pipelines as a way to improve downstream policies. We consider wildfire management \citep{jain2020review}, and seek, in the frame of the definition of the problem, to identify a policy to minimize the damage caused by a wildfire at a given location.   

\paragraph{Experimental details}

We design and construct a new wildfire dataset named {\tt FIRE!} that contains information on active fires from \textit{Visible Infrared Imaging Radiometer Suite} (VIIRS), as well as climate \citep{hersbach2020era5}, vegetation, and topographic information \citep{rollins2009landfire}.  {\tt FIRE!} includes $1.3$ million fire instances collected over the years $2020$ and $2021$. We focus on a region in California. The overall dataset contains $29$ features, spanning climate and climate variables such as temperature and wind speeds, vegetation types and the radiative power of a given wildfire at each location. We aggregate temporal data in weekly periods, resulting with $102$ weeks between $2020$ and $2021$. In this case, the variable $Z$ indicates a geographical location, and we consider $1600$ possible locations (a discretized $40$ by $40$ grid). Additional details on the {\tt FIRE!} dataset are provided in the Appendix. 

\paragraph{Predictive task}
Each predictive model takes as input a snapshot $x$ ($1$ week, aggregated as described above) and is tasked with predicting whether the largest wildfire will occur in that particular location in the next week. Given a prediction, a policy picks among three wildfire management strategies: (1) sending a land team to actively suppress the fire, (2) sending aircraft, or (3) applying an indirect approach to slow down the spread \citep{national1996wildland}. For our example challenge problem, we craft a decision loss based on insights provided by \citep{national1996wildland}, where each strategy is weighted depending on various factors. For instance, sending a land team in regions with high altitudes and slopes might incur larger losses due to challenging terrain.\footnote{We note that our choice of decision loss serves as a proxy for expert decision losses and is not meant to be optimal or take into account every available factor.}

We formulate three different decision-making pipelines: {\tt Direct policy} parametrizes directly the distribution over actions, given $x$; {\tt Location predict} introduces a location predictor $p_\theta(z|x)$ trained on historical data, and a downstream policy $\pi(x) = \arg\min_{a}\mathbb{E}_{p_\theta(z|x)}\ell(Z, a)$.
{\tt ORIGAMI} is equivalent to  \textit{Location predict} except the model $q_\theta(\bZ|x)$ is trained on sets generated by folding geographical locations. The policy in this case involves computing the Bayes optimal action in each location $z \in \bZ$ of the predicted partition, then keeping the one most frequently optimal. Fig.\ref{fig:strategies} provides an overview of different approaches. Models $p_\theta$ and $q_\theta$ are parametrized as UNets \citep{zhou2019unet++}.

\paragraph{Results}
Summary results are provided in Table \ref{tab:result_wildfire}. We observe policies based on predictions over {\tt ORIGAMI} sets to achieve lower decision losses on our test data. Notably, \textit{Location predict} fails to correctly predict any wildfire location during testing, suggesting generalization at the fine-grained scale with the amount of data available is not possible. Predicting {\tt ORIGAMI} set membership ($5$ and $10$ sets) reaches a considerably higher accuracy. Fig.\ref{fig:setss} provides an example of the sets produced by {\tt ORIGAMI}: the regions (in shades of grey) are indicative of features the decision loss is based on.

\begin{figure}
    \centering
    \includegraphics[width=0.65\linewidth]{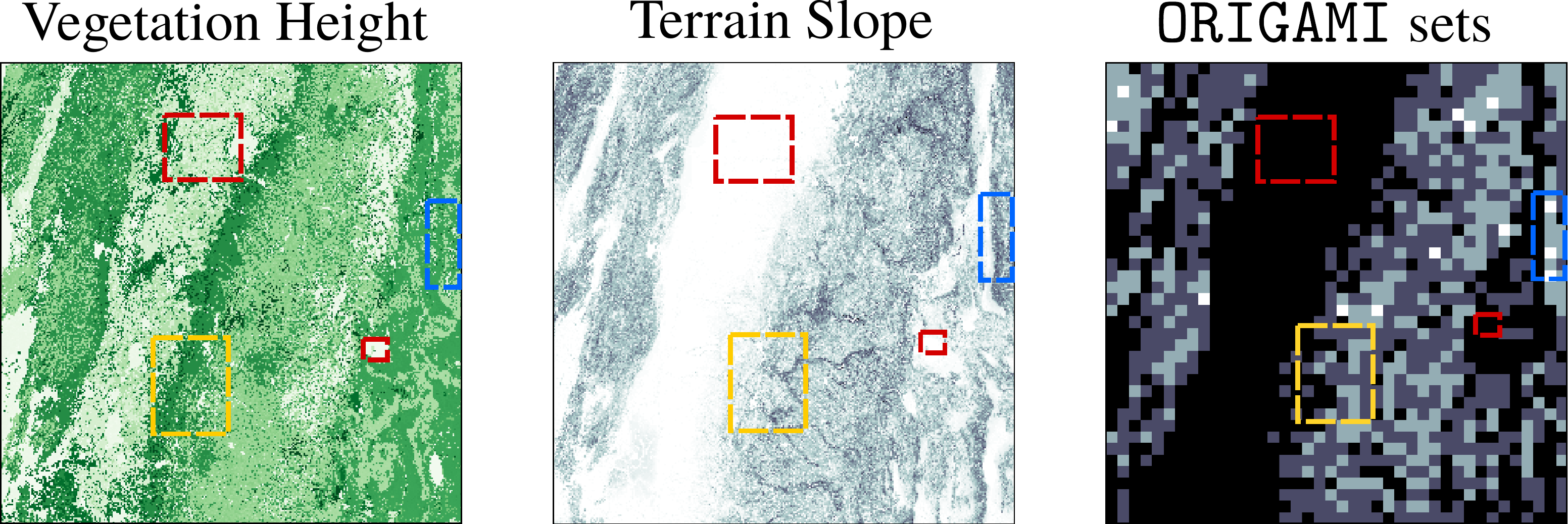}
    \vspace{-2.5mm}
    \caption{\footnotesize\textbf{[Left, Center]}: two features of the wildfire management dataset. \textbf{[Right]}: sets (regions) output of {\tt ORIGAMI} given a decision loss based on different features, including vegetation height and terrain slope. As is visible, the sets share common characteristics that are indicative of the features on the left e.g., red highlights regions of no slope and low vegetation, whereas blue highlights areas of tall vegetation.}
    \label{fig:setss}
\end{figure}

\begin{table}[t]
    \centering
    \begin{tabular}{c|cc}\toprule
        \textbf{Pipeline} & Predict acc. $\uparrow$ & Decision loss $\downarrow$ \\
        \midrule
        Random action & N/A & $0.820$ \\
        Direct policy & N/A & $0.731$ \\
        Location predict & $0$ & $0.723$\\
        \midrule
        \rowcolor{blue!3}
        {\tt ORIGAMI} (5)  & $\mathbf{67.4}$ & $0.707$ \\
        \rowcolor{blue!3}
        {\tt ORIGAMI} (10)  & $54.2$ & $\mathbf{0.701}$ \\
    \end{tabular}
    \vspace{-1mm}
    \caption{\small Benchmarking {\tt ORIGAMI} on wildfire management. Predicting sets of regions by quantizing space via {\tt ORIGAMI} folding induces higher quality downstream policy (as measured by the decision loss $\ell$). in {\tt ORIGAMI} (n), n refers to number of sets left at termination of the algorithm (we perform $C - n$ folds).}
    \label{tab:result_wildfire}
\end{table}

\subsection{Active Learning}

We apply {\tt ORIGAMI} to large neural network supervised training with limited data. In particular, we use the average H-entropy of sets generated by folding the simplex of classes as guidance to acquire additional data. The \textit{active learning} setting typically involves two interleaved stages: a training stage, where the network is optimized given available data, and an \textit{acquisition} stage, where a new batch is acquired\footnote{See  \citep{wang2016cost} for other acquisition strategies in deep active learning.}.

\paragraph{Experimental details}
In each run, we optimize an ensemble of $3$ {\tt ViT} \citep{dosovitskiy2020image} models for image classification on the standard {\tt CIFAR100} dataset. We start with a single batch of images, and each epoch we extend the dataset with an additional batch of $128$ images constructed following a particular procedure. We compare three different acquisition strategies: (1) random, in which we sample a new batch of images uniformly from all classes, (2) worst-$n$ class, which constructs a new sample of images from the $n$ classes with lowest marginal accuracy (3) {\tt ORIGAMI}, where we sample uniformly from the top set in the partition generated by {\tt ORIGAMI}, ranked by highest average H-entropy. To build {\tt ORIGAMI} sets, we use a decision loss where each model is an action, such that $|\bA| = 3$, $C = 100$, and each entry in $L$ is the average loss of each model on all instances of a given class.

\paragraph{Results}
We provide results in Table \ref{tab:active}. With $100$ epochs and training and a total dataset of $\approx 10$k images, we reach $35.7$\% accuracy when {\tt ORIGAMI} is used as the acquisition method. We observe a quick drop off when inspecting test performance on the worst classes ordered by marginal accuracy, with {\tt ORIGAMI} having an overall higher worst-case accuracy. Sampling according to highest H-entropy ensures marginal accuracy across classes is balanced, with new data acquired for classes on which the ensemble is struggling.

\begin{table}[h]
    \centering
    \begin{tabular}{c|ccc}\toprule
        \textbf{Acquisition} & All classes & bot-$50$ &  bot-$20$ \\
        \midrule
        Random & $17.6$\% & $6.5$\% & $2.8$\%\\
        Worst-$1$ & $30.3$\% & $12.4$\% & $6.1$\% \\
        Worst-$3$ & $30.4$\% & $14.3$\% & $7.0$\% \\
        \midrule
        \rowcolor{blue!4}
        {\tt ORIGAMI} & $\mathbf{35.7}$\% & $\mathbf{19.4}$\% & $\mathbf{10.6}$\% \\ 
    \end{tabular}
    \vspace{-1mm}
    \caption{\small Performance of different acquisition methods in the deep active learning experiments. We report average test accuracy across: all $100$ classes, worst $20$ classes, and worst $50$ classes. Worst-classes are identified by ordering based on marginal test accuracy.}
    \label{tab:active}
    \vspace{-2mm}
\end{table}

\subsection{Amortized {\tt ORIGAMI}}\label{sec:amortized}
The vertex objective introduced in \S\ref{sec:vertex} considerably improves the computation cost of obtaining good abstractions by means of iterative folding when compared to the other methods discussed. An alternative solution is to instead  amortize the cost of computing the Bayes optimal objective \eqref{eq:mc} by pre-training a neural network approximator to match it on a dataset of loss matrices. In the following, we discuss preliminary results and observations, emerging from training a simple neural network to fit the map $\bL \to \cL(\bL, i, j)$ on synthetic loss matrices $\bL$.

With $\bS(C)$ the space of all $C \x C$ upper triangular matrices ($\bS(C)\equiv \R^{C(C-1)/2}$), we define the neural network $a_\theta^C :\R^{|\bA|\x C}\to \bS(C)$ with parameters $\theta$.

We perform training by providing supervision to the model in the form of tuples $(\bL, \bm M_{ij})$, where the $\bm M_{ij}$ are produced \textit{offline} by the Monte Carlo approximation of the integral objective feeding $\bL\sim p(\bL)$ with $p(\bL) = \cU([0, 1]^{|\bA| \x C})$. 

The neural network parameters are then optimized via standard gradient methods to minimize a relative mean-square error objective between the model's predictions and target Bayes optimal folding costs. Such a model can be then invoked during iterative folding as a surrogate for other {$\tt ORIGAMI$} variants. 
Time and compute resources to build a dataset and train the model are thus traded for speedups at inference time when fast evaluation of the folding algorithm is prioritized. 
\paragraph{Experimental Details} 
We test the amortized procedure on a dataset of $10^4$ uniformly sampled loss matrices $\bL$. The number of actions $|\bA|$ is fixed to 2 while $C$ ranges from $3$ (the minimum significant number of classes) to $64$. This choice is due to the fact that all $\tt ORIGAMI$ algorithms scale linearly with the number of actions and we are mainly interested in amortizing the quadratic scaling with $C$. The ground-truth integral folding objectives in the form of the upper triangular matrices $\bm M_{ij}$ have then computed with the Monte Carlo procedure detailed in \S\ref{sec:integral} using $N = 10^3$ particles. The neural network $a_\theta^C$ comprises four layers with 64 neurons each. The loss matrices are flattened and passed to $a_\theta^C$ which returns vectors of dimension $C(C-1)/2$, corresponding to the predicted non-zero entries of $\bm M$.

\begin{figure}
    \centering
    \begin{tikzpicture}
\begin{axis}[
font=\scriptsize,
height=4.5cm, width=6.5cm,
legend cell align={left},
legend style={fill opacity=0, draw opacity=1, text opacity=1, draw=none},
xlabel={\sf Number of Classes $C$},
xmin=3, xmax=64,
xtick style={color=black},
ymin=-0.04915, ymax=1.03215,
ytick style={color=black}
]
\addplot [very thick, black]
table {%
3 0.0407479479908943
4 0.0447498857975006
5 0.0623383112251759
6 0.0811828523874283
7 0.103516526520252
8 0.118628762662411
16 0.386988997459412
32 0.491223067045212
64 0.50455915927887
};
\addlegendentry{\sf Test Loss}
\addplot [very thick, dotted]
table {%
3 0.983
4 0.928
5 0.839
6 0.741
7 0.64
8 0.556
16 0.036
32 0.005
64 0
};
\addlegendentry{\sf Test Accuracy}
\end{axis}

\end{tikzpicture}
    \begin{tikzpicture}
\begin{axis}[
font=\scriptsize,
height=4.5cm, width=6.5cm,
legend cell align={left},
legend columns=1,
legend style={
  fill opacity=0,
  draw opacity=1,
  text opacity=1,
  at={(0,1)},
  anchor=north west,
  draw=none
},
log basis y={10},
xlabel={\sf Number of Classes $C$},
xmin=3, xmax=8,
ylabel={\sf Fold Time $[s]$},
ylabel style={at={(-.22,.5)}},
ymin=1.46867900693809e-05, ymax=0.0198016525320284,
ymode=log,
]
\addplot [very thick, black]
table {%
3 0.000563902616500855
4 0.000838045597076416
5 0.00121562600135803
6 0.00169678354263306
7 0.0036574559211731
8 0.0142704923152924
};
\addlegendentry{\sf Integral}
\addplot [very thick, black, dotted]
table {%
3 0.000116840362548828
4 0.00011658763885498
5 0.000116155862808228
6 0.000118211269378662
7 0.000117031335830688
8 0.000117900848388672
};
\addlegendentry{\sf Vertex}
\addplot [very thick, blue!70]
table {%
3 2.20868587493896e-05
4 2.109694480896e-05
5 2.05049514770508e-05
6 2.03793048858643e-05
7 2.04916000366211e-05
8 2.05204486846924e-05
};
\addlegendentry{\sf Amortized}
\end{axis}

\end{tikzpicture}
    \vspace{-3mm}
    \caption{\footnotesize\textbf{[Left]}: Test RMSE loss between the output of $a^C_\theta(\bL)$ and $\bm M(\bL)$; accuracy in recovering $(i^*,j^*)$ using $a^C_\theta$ as surrogate objective for different numbers of classes. \textbf{[Right]}: Average (CPU) time required to obtain the optimal folding indexes $(i^*,j^*)$ for the integral (Monte Carlo), vertex and amortized $\tt ORIGAMI$.}
    \label{fig:time}
\end{figure}
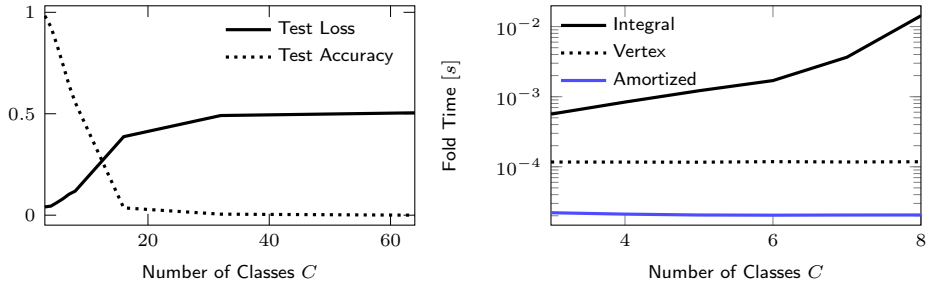

\paragraph{Results} The model, trained for 500 epochs for all values of $C$ is evaluated via a test set of $10^3$ additional tuples $(\bL, \bm M)$ in terms of RMSE loss and accuracy in predicting the optimal folding indices. We observe that the prediction accuracy rapidly decreases with the number of classes $C$ while the test RMSE loss increases, as reported by Fig.~\ref{fig:time}. This indicates that the learning problem becomes increasingly difficult with $C$ and the $\tt ORIGAMI$ folding cannot be amortized by a simple neural architecture. Nonetheless, in the region where the amortized model is accurate, i.e. for $C<8$, we report a significant speedup (several orders of magnitude) compared to $\tt ORIGAMI$ equipped with integral and vertex objectives.
\section{Related Work}

Multiple studies have taken a utility-theoretic perspective on learning and inference. The utility structure of problems has been leveraged in procedures for formulating abstractions of classes and actions as disjunctions \citep{horvitz1993abstract}. A decision-making perspective has also been used to guide abstraction for simplifying probabilistic inference \citep{poh1994}. Work includes efforts to drive the heterogeneous costs of misclassification into the objective functions and machine learning training procedures \citep{bach2006asymmutility}. 
Recent work has explored the end-to-end consideration of the quality of decisions in combinatorial optimization \citep{Wilder2019MeldingTD} and in human-AI collaboration \citep{wilder2020learning}. \cite{dubois2021lossy} propose to leverage knowledge of downstream tasks for compression, improving compression rates over task-agnostic methods. \cite{zhao2021comparing} formalize a new family of divergences, where discrepancy between distributions is measured through the optimal decision loss induced by each. H-entropy has seen use in Bayesian optimization \citep{neiswanger2022generalizing}, where a new family of acquisition functions is developed.

\section{Discussion}
We identify and outline several extensions related to the introduction of $\tt ORIGAMI$ algorithms in other deep learning domains, dynamic decision-making problems and numerical simulation.

\paragraph{Dynamic {\tt ORIGAMI}}
We have so far discussed static abstractions synthesized by $\tt ORIGAMI$ as a fixed set of sets of outcomes. However, as decision losses can change in time e.g., if decision matrix $L_t$ has an explicit dependence on time, the abstractions should track these new preferences. This can take place by applying a modified $\tt ORIGAMI$ algorithm able to unfold and fold, instead of starting anew each time.

\paragraph{Folding for simulation}
The process of quantization and creation of abstractions via {\tt ORIGAMI} can be loosely connected to meshing and discretization techniques ubiquitous in graphics and numerical simulation of differential equations \citep{plewa2005adaptive}. Instead of standard metrics to guide discretization, $\tt ORIGAMI$ is driven by utilities and is not constrained to sets that are local in space or time. As shown in our experiments, geographical regions found via $\tt ORIGAMI$ can involve disjoint subregions. Locality can be enforced or promoted via minimal changes to the method.

\paragraph{Adaptive tokenization}
The folding problem formalized by {\tt ORIGAMI} is closely related to tokenization procedures common in natural language processing and computer vision. It may be fruitful to investigate adaptive tokenization strategies using the machinery developed in this work, using other surrogates or estimators for the decision loss $L$.

\section{Conclusion}
We presented methods that guide the formulation of abstractions to simplify learning problems based on a careful consideration of downstream decisions.
The distillation of abstractions enables data-efficient learning of predictive models. We derive a class of iterative algorithms we refer to as {\tt ORIGAMI} that work to reduce the dimensionality of the probability simplex while preserving information useful for downstream decisions. In doing so, the method progressively hides information that is not necessary to implement optimal policies, allowing predictive models to learn over sets rather than fine-grained outcomes without loss in decision quality.

\bibliographystyle{abbrvnat}
\bibliography{main}
\clearpage
\appendix
\rule[0pt]{\columnwidth}{1pt}
\begin{center}
    \huge{Ideal Abstractions for Decision-Focused Learning} \\
    \vspace{0.3cm}
    \emph{Supplementary Material}
\end{center}
\rule[0pt]{\columnwidth}{1.5pt}
\doparttoc
\tableofcontents
\section{Derivations}

\begin{prop}[Folding increases H-entropy]\label{prop:increase}
Let $\bp\in\Delta^C$ and $\bq = f \circ f \dots \circ f(\bp)$ be any sequence of folds. Then,
\[
H_{\ell}(\bp) \leq H_{\tilde{\ell}}(\bq). 
\]
\end{prop}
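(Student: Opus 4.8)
The plan is to reduce the claim to a single fold and then induct on the length of the sequence. Each fold maps $\Delta^C \to \Delta^{C-1}$ and simultaneously coarsens $\ell$ into $\tilde\ell$ via the worst-case extension, so the post-fold pair $(\bq, \tilde\bL)$ is again a valid instance of the same problem on a support of size $C-1$. Consequently, once I establish the one-step inequality $H_\ell(\bp) \le H_{\tilde\ell}(f_{i\rightarrow j}(\bp))$ for an arbitrary fold, chaining it along $f \circ \cdots \circ f$ yields the stated result. I therefore focus on a single fold $f_{i\rightarrow j}$.

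First I would compare the vectors $\bL\bp$ and $\tilde\bL\bq$ coordinate-by-coordinate, that is, one action $a$ at a time. Recalling that folding sets $\bq_j = \bp_i + \bp_j$, leaves the other coordinates fixed, and replaces the relevant loss entry by $\tilde\bL_{aj} = \max\{\bL_{ai}, \bL_{aj}\}$, the contributions from all indices $k \neq i,j$ cancel, leaving
\[
(\tilde\bL\bq)_a - (\bL\bp)_a = \max\{\bL_{ai},\bL_{aj}\}\,(\bp_i+\bp_j) - \bL_{ai}\bp_i - \bL_{aj}\bp_j.
\]
The crux is that this is nonnegative for every $a$: since $\max\{\bL_{ai},\bL_{aj}\} \ge \bL_{ai}$ and $\max\{\bL_{ai},\bL_{aj}\} \ge \bL_{aj}$, while $\bp_i,\bp_j \ge 0$ because $\bp \in \Delta^C$, each of the two cross terms dominates its counterpart. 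Hence $(\tilde\bL\bq)_a \ge (\bL\bp)_a$ for all $a$.

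To finish, I would invoke the elementary monotonicity of the minimum: if $g(a) \ge f(a)$ pointwise then $\min_a g(a) \ge \min_a f(a)$ (evaluate $f$ at a minimizer of $g$). Applying this with $f(a) = (\bL\bp)_a$ and $g(a) = (\tilde\bL\bq)_a$ gives $H_{\tilde\ell}(\bq) = \min_a(\tilde\bL\bq)_a \ge \min_a(\bL\bp)_a = H_\ell(\bp)$, which closes the single-fold step and hence the induction.

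I do not anticipate a genuine obstacle here; the argument is essentially a one-line pointwise bound followed by monotonicity of $\min$. The only points requiring care are the index bookkeeping as the support shrinks from $C$ to $C-1$ across folds, and the observation that nonnegativity of the probabilities together with the worst-case ($\max$) extension is precisely what forces the inequality. This last point is also what signals, consistent with the Remark, that substituting the weighted-sum extension would turn the inequality into an equality rather than a strict increase.
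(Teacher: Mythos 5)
Your proof is correct and follows essentially the same route as the paper's: a per-action (row-wise) comparison showing $\bL_{ki}\bp_i + \bL_{kj}\bp_j \le \max\{\bL_{ki},\bL_{kj}\}(\bp_i+\bp_j)$ by nonnegativity of $\bp_i,\bp_j$, from which the result follows. In fact you make explicit two steps the paper leaves implicit---the monotonicity of $\min$ over actions and the induction that chains single folds into an arbitrary sequence---so your write-up is, if anything, more complete than the paper's.
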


\begin{proof}
We show the result for each row $k = 1, \dots, C$ of $\bL \bp$ and a single fold:
\[
\begin{aligned}
    \sum_{m=1}^{C} \bL_m \bp_m \leq \sum_{n=1}^{C-1} \tilde \bL_n\bq_n &\Leftrightarrow\\
    \bL_{ki} \bp_i + \bL_{kj} \bp_j \leq \max\{\bL_{ki}, \bL_{kj}\}\bq_j &\Leftrightarrow\\ 
    \bL_{ki} \bp_i + \bL_{kj} \bp_j \leq \max\{\bL_{ki}, \bL_{kj}\}\bp_i &~+\\ 
    \max\{\bL_{ki}, \bL_{kj}\}\bp_j. 
\end{aligned}
\]
\end{proof}

\begin{prop}[H--entropy is minimized on vertices]
The minimizer 
\[
    \bp^* = \arg\min_{\bp \in\Delta^C} H_{\ell}(\bp)
\]
is a vertex of $\Delta^C$
\end{prop}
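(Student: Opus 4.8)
The plan is to exploit the concavity of $H_\ell$, already remarked upon in the text, together with the elementary fact that a concave function on a polytope attains its minimum at an extreme point.

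First I would note that for each fixed action $a$ the map $\bp \mapsto (\bL\bp)_a = \sum_{i} \bL_{ai}\,\bp_i$ is linear, hence affine and concave, in $\bp$. Consequently $H_\ell(\bp) = \min_a (\bL\bp)_a$ is a pointwise minimum of finitely many affine functions, which is concave on the convex domain $\Delta^C$. This is the only structural property of $H_\ell$ the argument uses.

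Next I would recall that $\Delta^C$ is a compact convex polytope whose set of extreme points is exactly the collection of vertices $\{\eb_i\}_{i=1}^C$, i.e. the standard basis vectors (each corresponding to a deterministic outcome), and that every $\bp\in\Delta^C$ admits the convex decomposition $\bp = \sum_{i=1}^C \bp_i\,\eb_i$. The crux is then a concave Jensen inequality: concavity gives
\[
    H_\ell(\bp) = H_\ell\!\left(\textstyle\sum_{i} \bp_i\,\eb_i\right) \;\ge\; \sum_{i} \bp_i\,H_\ell(\eb_i) \;\ge\; \min_{i} H_\ell(\eb_i),
\]
where the final inequality holds because $\sum_i \bp_i = 1$ with $\bp_i\ge 0$, so the weighted average is bounded below by its smallest term. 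Hence the value of $H_\ell$ at an arbitrary point is at least the smallest vertex value, and the minimum over $\Delta^C$ is attained at any vertex $\eb_{i^\star}$ with $i^\star \in \argmin_i H_\ell(\eb_i)$.

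I do not expect a genuine obstacle here; the statement is a direct consequence of concavity. The only points warranting care are (i) justifying concavity via the min-of-affine representation rather than appealing to it informally, and (ii) noting that the minimizer need not be unique — ties among vertices, or a face of $\Delta^C$ on which $H_\ell$ is constant and minimal, are possible — so the claim is the \emph{existence} of a minimizing vertex, not uniqueness.
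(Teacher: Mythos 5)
Your proposal is correct and follows essentially the same route as the paper's own proof: decompose a point of $\Delta^C$ as a convex combination of the vertices $\eb_1,\dots,\eb_C$, apply Jensen's inequality (concavity of $H_\ell$ as a minimum of linear functions) to bound $H_\ell$ below by the weighted average of vertex values, and then bound that average by the smallest vertex value. Your additions---explicitly justifying concavity via the min-of-affine representation and remarking on non-uniqueness of the minimizer---are sound but do not change the argument.
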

\proof 
    Let $\bV_{\Delta}$ be the set of vertices of the simplex, i.e. the canonical basis $\eb_1,\dots, \eb_C$ of $\R^C$. 
    We need to show that $\min_{\bp\in\Delta^C} H_\ell(\bp) = \min_{i=1,\dots,C} H_\ell(\eb_i)$.
    Due to convexity of the simplex $\Delta^C$, the minimizer $\bp^*$ can be expresses as a convex combination of the vertices, i.e.
    \[
        \bp^* = \sum_{i=1}^{C}\alpha_i \eb_i,\quad\alpha_i\geq 0,~\sum_{i=1}^C\alpha_i = 1.
    \]
    By Jensen's inequality we have
    \[
        H_\ell(\bp^*) = \inf_a \bL \bp^* = \inf_a \bL \sum_{i=1}^{C}\alpha_i \eb_i \geq \sum_{i=1}^{C}\alpha_i \inf_a \bL\eb_i
    \]
    and 
    \[
        \sum_{i=1}^{C}\alpha_i \inf_a \bL\eb_i \geq \min_{i=1,\dots,C} \inf_a \bL \eb_i
    \]
    so the minimum of $H_\ell$ over $\bp\in\Delta^C$ is bounded below by the minimum
over the vertices. Since the vertices belong to $\Delta^C$, the result is proved.
\endproof

\section{Experiments}
\subsection{Folding for Decision Problems}
\paragraph{Dataset curation}

We design and build a new dataset named {\tt FIRE!} that contains active {\color{red!40!black}fire} information from \textit{Visible Infrared Imaging Radiometer Suite} (VIIRS) on a spatial resolution of $375$ meters, as well as {\color{blue!40!black}climate} and {\color{green!40!black}vegetation} data.

{\color{red!40!black}Fires: } We consider $1,339,234$ fire instances collected over the years $2020$ and $2021$. Each instance contains fire radiative power, location (latitude and longitude) and auxiliary information such as time of day and confidence for the measurement. We select the region spanned by latitude $(36, 39)$ and longitude $(-121.6, -118.6)$.

{\color{green!40!black}Vegetation: } We collect data from the LANDFIRE program. In particular, we add the following features: existing vegetation height (EVH), existing vegetation cover (EVC), existing vegetation type (EVT), slope degrees (SlpD), slope percent rise (slpP), roads, aspect (Asp). As these databases are updates at lower frequencies than VIIRs and climate, we have access to $2019$ and $2020$ snapshots which we use as additional context for the model. The region is aligned with VIIRS spatial coordinates.

{\color{blue!40!black}Climate: } We extract a set of climate and weather features from the large-scale ERA5 dataset. Weather and climate variables describe a larger region than latitude $(36, 39)$ and longitude $(-121.6, -118.6)$ to provide context for the predictive model. All data slices are aligned in time.

\paragraph{Predictive task}
The model takes as input a snapshot $x$ ($1$ week, aggregated as described above) and is tasked with predicting the location of the largest wildfire (measured in radiative power) in the following week. When using {\tt ORIGAMI}, spatial locations are clustered according to the decision loss, and thus the dimension of $y$ is smaller than the dimension of $x$. We optimize the parameters of all models using a standard binary cross entropy loss.

\paragraph{Decision making and decision loss}
We design a simple, deterministic closed-loop policy reliant on predictions made by a deep learning model. Our goal is to investigate whether the quality of a decision policy can be improved by performing upstream prediction on a "simplified" space of locations found as a decision--optimal clustering with {\tt ORIGAMI}.

Wildfire management actions are: (1) \textit{land intervention}, (2) \textit{aircraft intervention} (3) \textit{indirect containment}, according to the location predicted by the wildfire location model. The decision loss is crafted according to insights extracted from \citep{national1996wildland}. The following factors are used: fire radiative power, existing vegetation height, roads, temperature, magnitude of wind. In particular, action (1) incurs in high cost when slope and terrain height are larger, (2) when wind is strong, and (3) when vegetation is dense. Since all features are on different scales, we normalized the decision loss to obtain values in a comparable range. We note that the decision loss $\ell$ is not meant to encode \textit{all} factors one may want to consider for wildfire management, as the experiment is meant to showcase potential applications of {\tt ORIGAMI}.

\paragraph{Training details}

We train the wildfire location predictive model for $100$ epochs using the {\sc AdamW} optimizer~\citep{loshchilov2017decoupled}, with a learning rate of $0.001$ and a cosine decay schedule to $0.0001$. We set the batch size to $8$ (each element of the batch contains a temporal slice of context data, with the model asked to predict the location of the largest wildfire in the following week). All models are standard ${\tt ResNets}$ with $18$ layers.

When evaluating the policy, we pick the Bayes optimal wildfire strategy between: (1) \textit{land intervention}, (2) \textit{aircraft intervention} (3) \textit{indirect containment}, according to the location predicted by the wildfire location model.

\subsection{Active Learning}

\paragraph{Training details}

We train ensembles of $3$ vision transformers {\tt ViT} \citep{dosovitskiy2020image} on the CIFAR100 dataset, starting from a single random batch of data. Each epoch, we increase dataset size by sampling a new batch based on different acquisition methods:
\begin{itemize}[leftmargin=0.3in]
    \item \textit{random}: a new batch of images is obtained by sampling uniformly from all $100$ classes 
    \item worst-$n$: the new batch is obtained by sampling data uniformly if the corresponding label belongs to the $n$ classes with lowest marginal accuracy 
    \item {\tt ORIGAMI}: we apply {\tt ORIGAMI} to generate a partition of all classes. We select the set of classes in the partition with highest average H-entropy, and sample uniformly.
\end{itemize}
We use {\tt ViT}-base from the {\tt timm} library \citep{rw2019timm} with patch size $16$ and latent dimension $224$, and add the logits of each model in the ensemble before computing the cross-entropy loss. We train all models $100$ epochs using {\sc AdamW} optimizer~\citep{loshchilov2017decoupled}, a learning rate schedule with $20$ epochs of linear warmup ($0.0001$ to $0.001$) followed by cosine decay down to $0.0001$. We use batch size $128$.

\end{document}